\newcommand{\stdev}[1]{{\tiny (#1)}}
\newcommand{\z}{\mathbf{z}}
\newcommand{\x}{\mathbf{x}}
\newcommand{\knn}{$k-$NN}
\newtheorem*{mylem*}{Lemma} 
\newtheorem{myprop}{Proposition} 
\newtheorem*{mythe*}{Theorem} 
\newtheorem*{mycor*}{Corollary} 
\newtheorem*{sec_proof*}{Proof}
\newtheorem*{sec_prop*}{Proposition}
\newtheorem*{sec_lemma*}{Lemma}
\newcommand{\NOcause}[1]{}
    \newtheoremstyle{TheoremNum}
        {\topsep}{\topsep}              
        {\itshape}                      
        {}                              
        {\bfseries}                     
        {.}                             
        { }                             
        {\thmname{#1}\thmnote{ \bfseries #3}}
    \theoremstyle{TheoremNum}
    \newtheoremstyle{TheoremNum}
        {\topsep}{\topsep}              
        {\itshape}                      
        {}                              
        {\bfseries}                     
        {.}                             
        { }                             
        {\thmname{#1}\thmnote{ \bfseries #3}}
    \theoremstyle{TheoremNum}
\title{An Adjusted Nearest Neighbor Algorithm Maximizing the F-Measure from Imbalanced Data }
\author{ R\'emi Viola$^{1,2}$, R\'emi Emonet$^{1}$, Amaury Habrard$^{1}$, Guillaume Metzler$^{1}$, S\'ebastien Riou$^{2}$ and Marc Sebban}
\affil[1]{Laboratoire Hubert Curien UMR 5516,
Univ Lyon, UJM F-42023,  Saint-Etienne, France. }
\affil[2]{Direction G\'en\'erale des Finances Publiques, Minist\`ere de l'Economie et des Finances, Paris, France. }
\date \today
\begin{document}
\maketitle

\begin{abstract}
In this paper, we address the challenging problem of learning from imbalanced data using a Nearest-Neighbor (NN) algorithm. 
In this setting, the minority examples typically belong to the class of interest  requiring the optimization of specific criteria, like the F-Measure. Based on simple geometrical ideas,  we introduce an algorithm that reweights the distance between a query sample and any positive training example. This leads to a modification of the Voronoi regions and thus of the decision boundaries of the NN algorithm.  We provide a theoretical justification about the weighting scheme needed to reduce the False Negative rate while controlling the number of False Positives. We perform an extensive experimental study on many public imbalanced datasets, but also on large scale non public data from the French Ministry of Economy and Finance on a tax fraud detection task, showing that our method is very effective and, interestingly, yields the best performance when combined with state of the art sampling methods.
\end{abstract}

\section{Introduction}
\label{sec:intro}

Intrusion detection, health care insurance or bank fraud identification, and more generally anomaly detection, {\it e.g.} in medicine or in industrial processes, are tasks requiring to address the challenging problem of learning from imbalanced data~\cite{aggarwal2017outlier,chandola2009anomaly,bauder2018data}.
In such a setting, the training set is composed of a few positive examples ({\it e.g.} the frauds) and a huge amount of negative samples ({\it e.g.} the genuine transactions). Standard  learning algorithms struggle to deal with this imbalance scenario because they are typically based on the minimization of (a surrogate of) the 0-1 loss. Therefore, a trivial solution  consists in assigning the majority label to any test query leading to a high performance from an accuracy perspective but missing the (positive) examples of interest. 
To overcome this issue, several strategies have been developed over the years. 
The first one consists in the optimization of loss functions based on measures that are more appropriate for this context such as the \textit{Area Under the ROC Curve} (AUC), the \textit{Average Precision} (AP) or the \textit{F-measure} to cite a few~\cite{ferri2009experimental,steck2007hinge}.
The main pitfalls related to such a strategy concern the difficulty to directly optimize non smooth, non separable and non convex measures.
A simple and usual solution to fix this problem consists in using off-the-shelf learning algorithms (maximizing the accuracy) and a posteriori pick the model with the highest AP or F-Measure.
Unfortunately, this might be often suboptimal.
A more elaborate solution aims at designing differentiable versions of the previous non-smooth measures and optimizing them, {\it e.g.} as done by gradient boosting in~\cite{FreryHSCH17} with a smooth surrogate of the Mean-AP.
The second family of methods is based on the modification of the distribution of the training data using sampling strategies~\cite{fernandez2018smote}. 
This is typically achieved by removing examples from the majority class, as done, {\it e.g.}, in \textit{ENN} or \textit{Tomek's Link}~\cite{tomek1976}, and/or by adding  examples from the minority class, as in SMOTE~\cite{chawla2002smote} and its variants, or by resorting to generative adversarial models~\cite{GoodfellowPMXWOCB14}. 
One peculiarity of imbalanced datasets can be interpreted from a geometric perspective.
As illustrated in Fig.~\ref{fig:voronoi} (left) which shows the Voronoi cells on an artificial imbalanced dataset (where two adjacent cells have been merged if they concern examples of the same class), the regions of influence of the positive examples are much smaller than that of the negatives.
This explains why at test time, in imbalanced learning, the risk to get a false negative is high, leading to a low F-measure, the criterion we focus on in this paper, defined as the harmonic mean of the $Precision=\frac{TP}{TP+FP}$ and the $Recall=\frac{TP}{TP+FN}$, where $FP$ (resp. $FN$) is the number of false positives (resp. negatives) and $TP$ the number of true positives.
Note that increasing the regions of influence of the positives would allow us to reduce $FN$ and improve the F-measure.
However, not controlling the expansion of these regions may have a dramatic impact on $FP$, and so on the F-Measure, as illustrated in Fig.~\ref{fig:voronoi} (right). 

\begin{figure}[t]
\includegraphics[width=.33\linewidth]{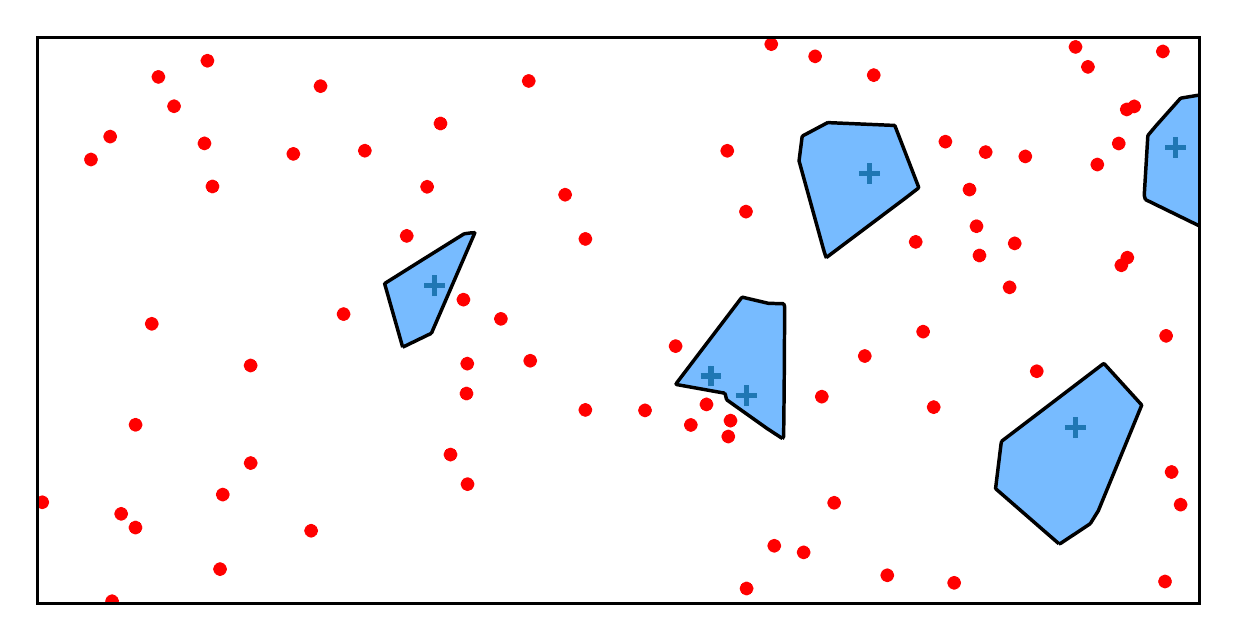}%
\includegraphics[width=.33\linewidth]{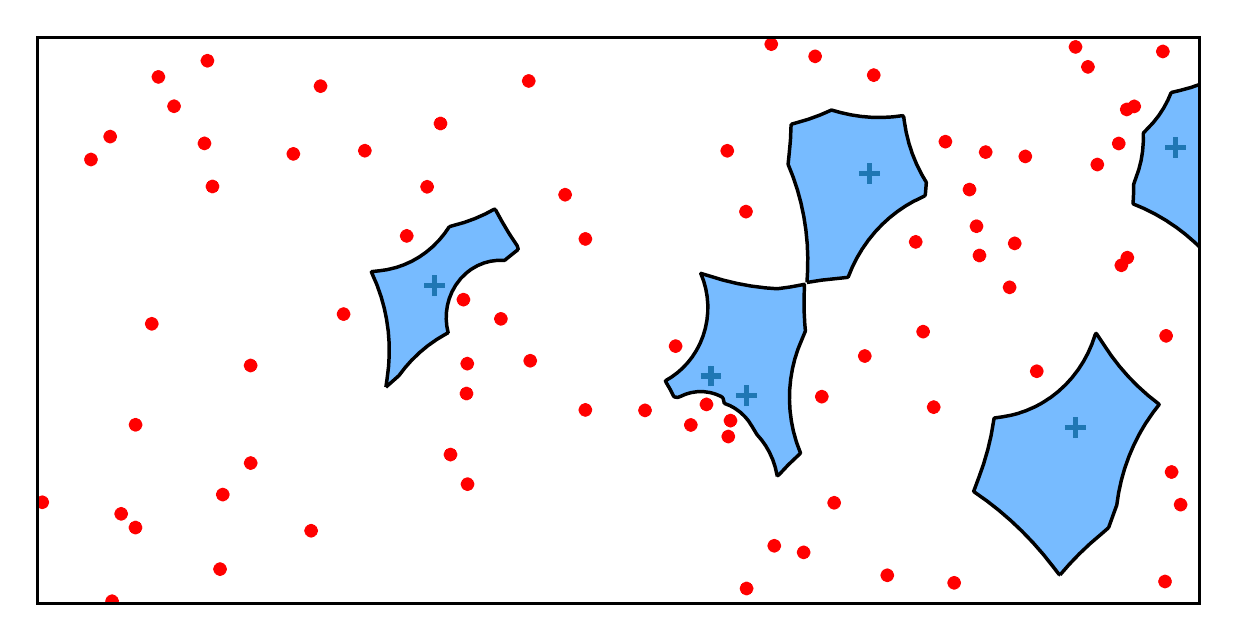}%
\includegraphics[width=.33\linewidth]{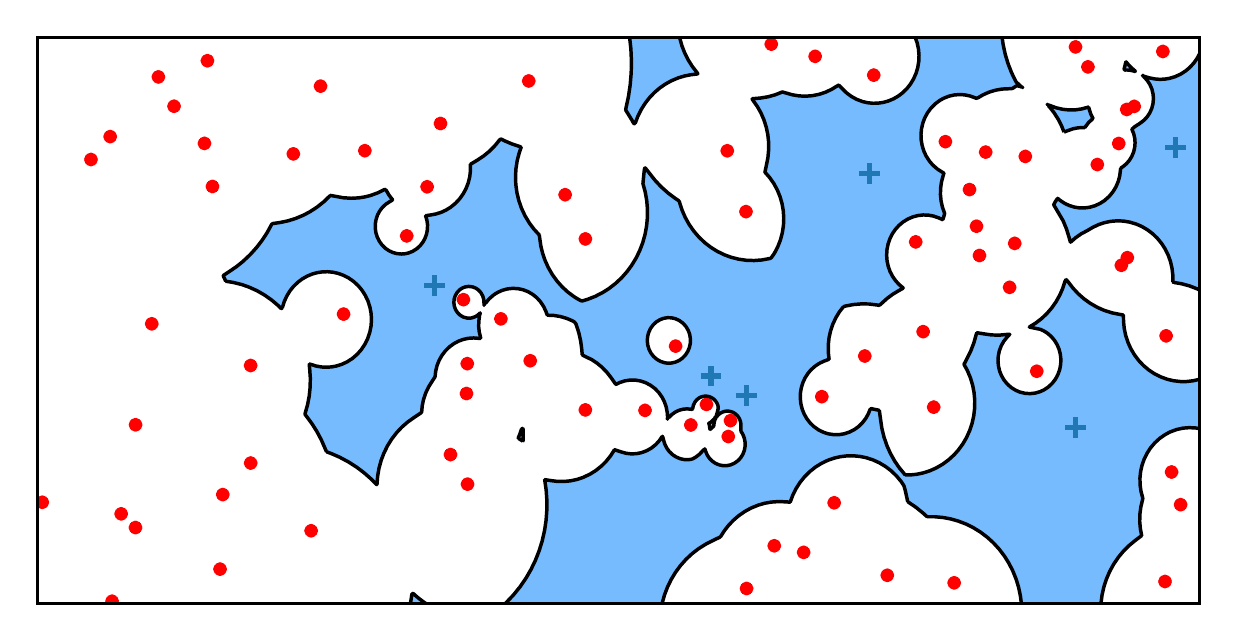}
\caption{Toy imbalanced dataset: On the left, the Voronoi regions around the positives are small.
  The risk to generate false negatives (FN) at test time is large.
  On the right: by increasing too much the regions of influence of the positives, the probability to get false positives (FP) grows.
  In the middle: an appropriate trade-off between the two previous situations.}
\label{fig:voronoi}
\end{figure}

The main contribution of this paper is about the problem of finding the appropriate trade-off (Fig.~\ref{fig:voronoi} (middle)) between the two above-mentioned extreme situations (large $FP$ or $FN$) both leading to a low F-Measure.
A natural way to increase the influence of positives may consist in using generative models (like GANs~\cite{GoodfellowPMXWOCB14}) to sample new artificial examples, mimicking the negative training samples.
However, beyond the issues related to the parameter tuning, the computation burden and the complexity of such a method, using GANs to optimize the precision and recall is still an open problem (see \cite{NIPS2018_7769} for a recent paper on this topic).
We show in this paper that a much simpler strategy can be used by modifying the distance exploited in a $k$-nearest neighbor (NN) algorithm~\cite{cover1967nearest} which enjoys many interesting advantages, including its simplicity, its capacity to approximate asymptotically any locally regular density, and its theoretical rootedness~\cite{luxburg2004distance,kontorovich2015bayes,kontorovich2016active}.
$k$-NN also benefited from many algorithmic advances during the past decade in the field of metric learning, aiming at optimizing under constraints the parameters of a metric, typically the Mahalanobis distance, as done in LMNN~\cite{weinberger2009distance} or ITML~\cite{davis-et-al-icml-2007} (see \cite{2015BelletHS} for a survey).
Unfortunately, existing metric learning methods are dedicated to enhance the $k$-NN accuracy and do not focus on the optimization of criteria, like the F-measure, in scenarios where the positive training examples are scarce.
A geometric solution to increase, at a very low cost, the region of influence of the minority class consists in modifying the distance when comparing a query example to a positive training sample.
More formally, we show in this paper that the optimization of the F-Measure is facilitated by weighting the distance to any positive by a coefficient $\gamma \in [0,1]$ leading to the expansion of the Voronoi cells around the minority examples.
An illustration is given in Fig.\ref{fig:voronoi} (middle) which might be seen as a good compromise that results in the reduction of $FN$ while controlling the risk to increase $FP$.
Note that our strategy boils down to  modifying the local density of the positive examples.
For this reason, we claim that it can be efficiently combined with SMOTE-based sampling methods whose goal is complementary and consists in generating examples on the path linking two (potentially far) positive neighbors.
Our experiments will confirm this intuition.

The rest of this paper is organized as follows. Section~\ref{sec:notations} is dedicated to the introduction of our notations. The related work is presented in Section~\ref{sec:related}. Section~\ref{sec:ourmethod} is devoted to the presentation of our method. We perform an extensive experimental study in Section~\ref{sec:experiments} on many imbalanced datasets, including non public data from the French Ministry of Economy and Finance on a tax fraud detection task. We give evidence about the complementarity of our method with sampling strategies. We finally conclude in Section~\ref{sec:conclusion}.


\section{Notations and Evaluation Measures}
\label{sec:notations}

We consider a training sample $S=\{(\x_i,y_i), i=1,...,m\}$ of size $m$, drawn from an unknown joint distribution $\mathcal{Z}=\mathcal{X}\times \mathcal{Y}$, where $\mathcal{X}= \mathbb{R}^p$  is the feature space and $\mathcal{Y}= \{-1,1\}$ is the set of labels. 
Let us assume that  $S=S_+ \cup S_-$ with $m_+$ positives $\in S_+$ and $m_-$ negatives  $\in S_-$ where $m=m_++m_-$. 

Learning from imbalanced datasets requires to optimize appropriate measures that take into account the scarcity of positive examples. Two measures are usually used: the \textit{Recall} or \textit{True Positive Rate} which measures the capacity of the model to recall/detect positive examples, and the \textit{Precision} which is the confidence in the prediction of a positive label:
\[\text{Recall}=\dfrac{\text{$TP$}}{\text{$TP$}+\text{$FN$}} \quad \text{and} \quad \text{Precision}=\dfrac{\text{$TP$}}{\text{$TP$}+\text{$FP$}},\]
where $FP$ (resp. $FN$) is the number of false positives (resp. negatives) and $TP$ is the number of true positives. Since one can arbitrarily improve the Precision if there is no constraint on the Recall (and vice-versa), they are usually combined into a single measure: the \textit{F-measure}~\cite{rijsbergen1979information} (or $F_1$ score), which is widely used in fraud and anomaly detection, and more generally in imbalanced classification~\cite{gee2014fraud}.
 \[F_1=\dfrac{2\times \text{Precision}\times \text{Recall}}{\text{Precision} + \text{Recall}}=\dfrac{2 \text{TP}}{2\text{TP}+\text{FN}+\text{FP}}.\]
Note that $F_1$ considers the Precision and Recall equally. 
%


\section{Related Work}
\label{sec:related}

In this section, we present the main strategies that have been proposed in the literature to address the problem of learning from imbalanced datasets. We first present methods specifically dedicated to enhance a $k$-NN classifier. Then, we give an overview of the main sampling strategies used to balance the classes. All these methods will be used in the experimental comparison in Section~\ref{sec:experiments}.

\subsection{Distance-based Methods}
\label{subsec:distance}

Several strategies have been devised to improve $k$-NN. The oldest method is certainly the one presented in~\cite{dudani1976distance} which consists in associating to each neighbor a voting weight that is inversely proportional to its distance to a query point $\x$. The assigned label $\hat{y}$ of $\x$ is  defined as:
\[\hat{y}=\sum_{\x_i\in \text{kNN}(\x)} y_i\times \frac{1}{d(\x,\x_i)},\]
where $\text{kNN}(\x)$ stands for the set of the $k$ nearest neighbors of $\x$.
A more refined version consists in taking into account both the distances to the nearest neighbors and the distribution of the features according to the class $p(\x_i\mid y_i)$~\cite{liu2011class}. 
Despite these modifications in the decision rule, the sparsity of the positives remains problematic and it possible that no positives belong in the neighborhood of a new query $\x$.
To tackle this issue, a version of the $k$-NN, called $kP$NN~\cite{zhang2013positive}, is to consider the region of the space around a new query $\x$ which contains exactly $k$ positive examples. By doing so, the authors are able to use the density of the positives to estimate the probability of belonging in the minority class.

A more recent version has been shown to perform better than the two previously mentioned: $kR$NN~\cite{zhang2017krnn}. If the idea remains similar (i.e. estimating the local sparsity of minority examples around a new query), the posterior probability of belonging in the minority class is adjusted so that it takes both the local and global disequilibrium for the estimation.

In order to weight the examples, in~\cite{hajizadeh2014nearest}, the authors use an iterative procedure to optimize the accuracy on each class using the nearest neighbor classifier (i.e. $k=1$) 

In~\cite{barandela2003strategies}, the authors account both the label and the distance to the neighbors $(\x_i,y_i)$ to define a weighted metric $d'$ from the euclidean distance $d$, as follows: 

\[d'(\x,\x_i)= \left(\frac{m_i}{m}\right)^{1/p} d(\x,\x_i),\]
where $m_i$ is the number of examples in the class $y_i$.
As we will see later, this method falls in the same family of strategies as our contribution, aiming at weighting the distance to the examples according to their label.
However, three main differences justify why our method will be better in the experiments:
(i) $d'$ is fixed in advance while we will adapt the weight that optimizes the $F$- measure;
(ii) because of (i), $d'$ needs to take into account the dimension $p$ of the feature space (and so will tend to $d$ as $p$ grows) while this will be intrinsically captured in our method by optimizing the weight given the $p$-dimensional space;
(iii) $d'$ is useless when combined with sampling strategies (indeed, $\frac{m_i}{m}$ would tend to be uniform) while our method will allow us to weight differently the original positive examples and the ones artificially generated.

Another way to assign weights to each class, which is close to the sampling methods presented in the next section, is to duplicate the positive examples according to the Imbalance Ratio: $m_-/m_+.$
Thus, it can be seen as a \textit{uniform} over-sampling technique, where all positives are replicated the same number of times.
However, note that this method requires to work with $k>1$.

A last family of methods that try to improve $k$-NN is related to  {\it metric learning}. LMNN~\cite{weinberger2009distance} or ITML~\cite{davis-et-al-icml-2007} are two famous examples which optimize under constraints a Mahalanobis distance $ d_{\mathbf{M}}(\x,\x_i)=\sqrt{(\x-\x_i)^{\top} \mathbf{M} (\x-\x_i)}$ parameterized by a positive semidefinite (PSD) matrix $\mathbf{M}$. Such methods seek a linear projection of the data in a latent space where the Euclidean distance is applied. As we will see in the following, our weighting method is a specific case of metric learning which looks for a diagonal matrix - applied only when comparing a query to a positive example - and that behaves well in terms of F-Measure.


\subsection{Sampling Strategies}
\label{subsec:sampling}

One way to overcome the issues induced by the lack of positive examples is to compensate artificially the imbalance between the two classes. Sampling strategies~\cite{fernandez2018smote} have been proven to be very efficient to address this problem. 
In the following, we overview the  most used methods in the literature. 

The Synthetic Minority Over-sampling Technique~\cite{chawla2002smote} (SMOTE) over-samples a dataset by creating new synthetic positive data. 
For each minority example $\x$, it randomly selects one of its $k$ nearest positive neighbors and then creates a new random positive point on the line between this neighbor and $\x$. This is done until some desired ratio is reached.

Borderline-SMOTE~\cite{han2005borderline} is an improvement of the SMOTE algorithm.
While the latter generates synthetic points from all positive points, BorderLine-SMOTE only focuses on those having more negatives than positives in their neighborhood. More precisely, new points are generated if the number $n$ of negatives in the $k$-neighborhood is such that $k/2 \leq n \le k$.

The Adaptive Synthetic~\cite{he2008adasyn} (ADASYN) sampling approach is also inspired from SMOTE. 
By using a weighted distribution, it gives more importance to classes that are more difficult to classify, {\it i.e.} where positives are surrounded by many negatives, and thus generates more synthetic data for these classes.

Two other strategies combine an over-sampling step with an under-sampling procedure. 
The first one uses the Edited Nearest Neighbors~\cite{wilson1972asymptotic} (ENN) algorithm on the top of SMOTE. 
After SMOTE has generated data, the ENN algorithm removes data that are misclassified by their $k$ nearest neighbors. The second one combines SMOTE with Tomek's link~\cite{tomek1976}. 
A Tomek's link is a pair of points $(\x_i, \x_j)$ from different classes for which there is no other point $\x_k$ verifying $d(\x_i,\x_k) \le d(\x_i,\x_j)$ or $d(\x_k,\x_j) \le d(\x_i,\x_j)$. 
In other words, $\x_i$ is the nearest neighbor of $\x_j$ and vice-versa. If so, one removes the example of $(\x_i, \x_j)$ that belongs to the majority class. Note both strategies tend to eliminate the overlapping between classes.

Interestingly, we can note that all the previous sampling methods try to overcome the problem of learning from imbalanced data by resorting to the notion of $k$-neighborhood.
This is justified by the fact that $k$-NN has been shown to be a good estimate of the density at a given point in the feature space.
In our contribution, we stay in this line of research.
Rather than generating new examples, that would have a negative impact from a complexity perspective, we locally modify the density around the positive points.
This is achieved by rescaling the distance between a test sample and the positive training examples.
We will show that such a strategy can be efficiently combined with sampling methods, whose goal is complementary, by potentially generating new examples in regions of the space where the minority class is not present.

\section{Proposed Approach}
\label{sec:ourmethod}

In this section, we present our $\gamma$\knn{} method which works by scaling the distance between a query point and positive training examples by a factor.

\subsection{A Corrected $k-$NN algorithm}
\label{subsec:algo}

Statistically, when learning from imbalanced data, a new query $\x$ has more chance to be close to a negative example due to the rarity of positives in the training set, even around the mode of the positive distribution.
We have seen two families of approaches that can be used to counteract this effect:
(i) creating new synthetic positive examples, and
(ii) changing the distance according to the class. 
The approach we propose falls into the second category.

We propose to modify how the distance to the positive examples is computed, in order to compensate for the imbalance in the dataset. 
We artificially bring a new query $\x$ closer to any positive data point $\x_i\in S_+$ in order to increase the effective area of influence of positive examples.
The new measure $d_\gamma$ that we propose is defined, using an underlying distance $d$ (e.g. the euclidean distance) as follows:
\[d_\gamma(\x,\x_i)=
\begin{cases}
d(\x,\x_i) & \text{if} \; \x_i\in S_-,\\
\gamma \cdot d(\x,\x_i) & \text{if} \;\x_i\in S_+.
\end{cases}  \]
As we will tune the $\gamma$ parameter, this new way to compute the similarity to a positive example is close to a Mahalanobis-distance learning algorithm, looking for a PSD matrix, as previously described.
However, the matrix $\mathbf{M}$ is restricted to be $\gamma^2 \cdot \mathbf{I}$, where $\mathbf{I}$ refers to the identity matrix.
Moreover, while metric learning typically works by optimizing a convex loss function under constraints, our $\gamma$ is simply tuned such as maximizing the non convex F-Measure.
Lastly, and most importantly, it is applied only when comparing the query to positive examples.
As such, $d_\gamma$ is not a proper distance, however, it is exactly this which allows it to compensate for the class imbalance.
In the binary setting, there is no need to have a $\gamma$ parameter for the negative class, since only the relative distances are used.
In the multi-class setting with $K$ classes, we would have to tune up to $K-1$ values of $\gamma$.

Before formalizing the $\gamma$\knn{} algorithm that will leverage the distance $d_\gamma$, we illustrate in Fig.~\ref{method:fig:twopoints}, on 2D data, the decision boundary induced by a nearest neighbor binary classifier that uses $d_\gamma$. We consider an elementary dataset with only two points, one positive and one negative. The case of $\gamma=1$, which is a traditional 1-NN is shown in a thick black line.
Lowering the value of $\gamma$ below $1$ brings the decision boundary closer to the negative point, and eventually tends to surround it very closely.
In Fig~\ref{intro:fig:simuldata}, two more complex datasets are shown, each with two positive points and several negative examples.
As intuited, we see that the $\gamma$ parameter allows to control how much we want to push the boundary towards negative examples.

\begin{figure}[t]
\centerline{\includegraphics[width=0.33\textwidth]{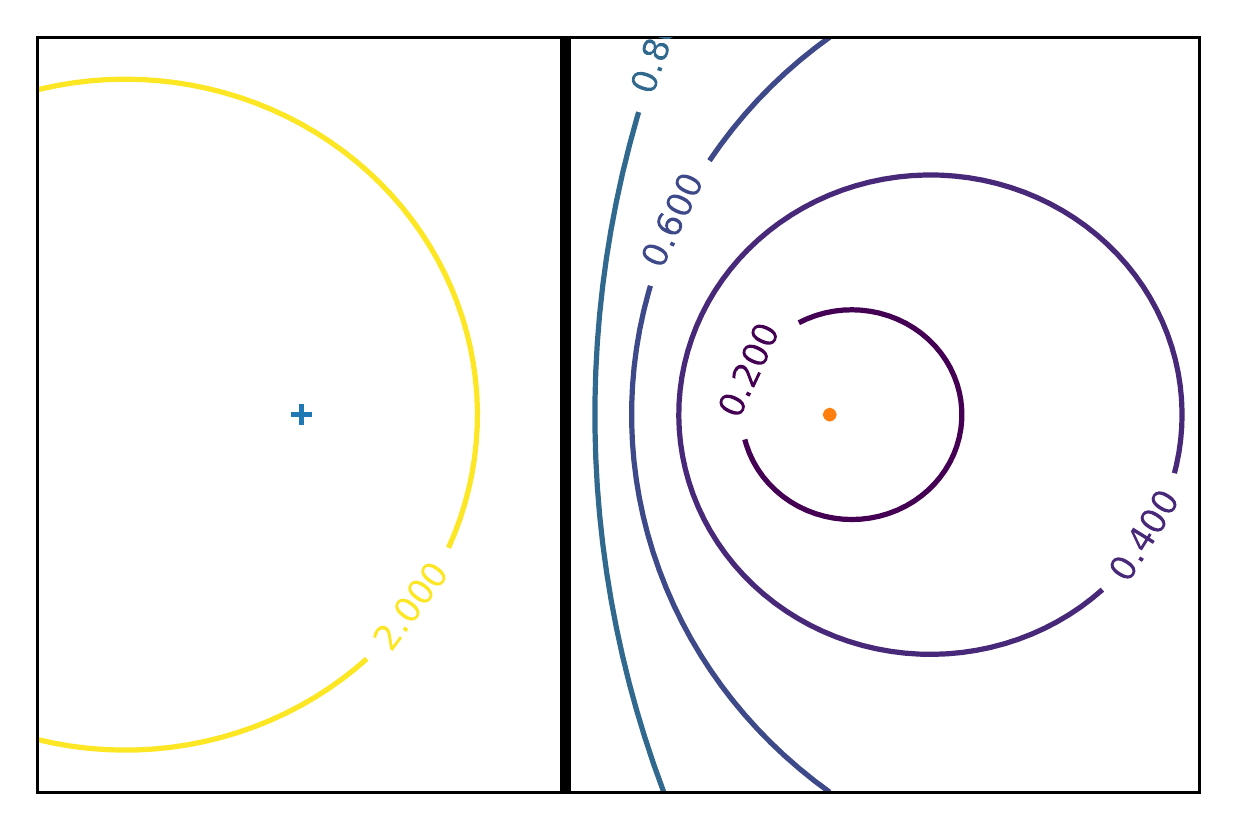}}
\caption{Evolution of the decision boundary based on $d_\gamma$, for a 1-NN classifier, on a 2D dataset with one positive (resp. negative) instance represented by a blue cross (resp. orange point).
  The value of $\gamma$ is given on each boundary ($\gamma=1$ on the thick line).
  }
 
\label{method:fig:twopoints}
\end{figure}

We can now introduce the $\gamma$\knn{} algorithm (see Algo~\ref{algo:classif}) that is parameterized by a $\gamma$ parameter.
It has the same overall complexity as \knn{}.
The first step to classify a query $\x$ is to find its $k$ nearest negative neighbors and its $k$ nearest positive neighbors.
Then, the distances to the positive neighbors are multiplied by $\gamma$, to obtain $d_\gamma$.
These $2k$ neighbors are then ranked and the $k$ closest ones are used for classification (with a majority vote, as in \knn{}).
It should be noted that, although $d_{\gamma}$ does not define a proper distance, we can still use any existing fast nearest neighbor search algorithm, because the actual search is done (twice but) only using the original distance $d$.

\vspace{-0.2cm}

\begin{figure}[t]
  \centerline{
    \includegraphics[scale=0.33]{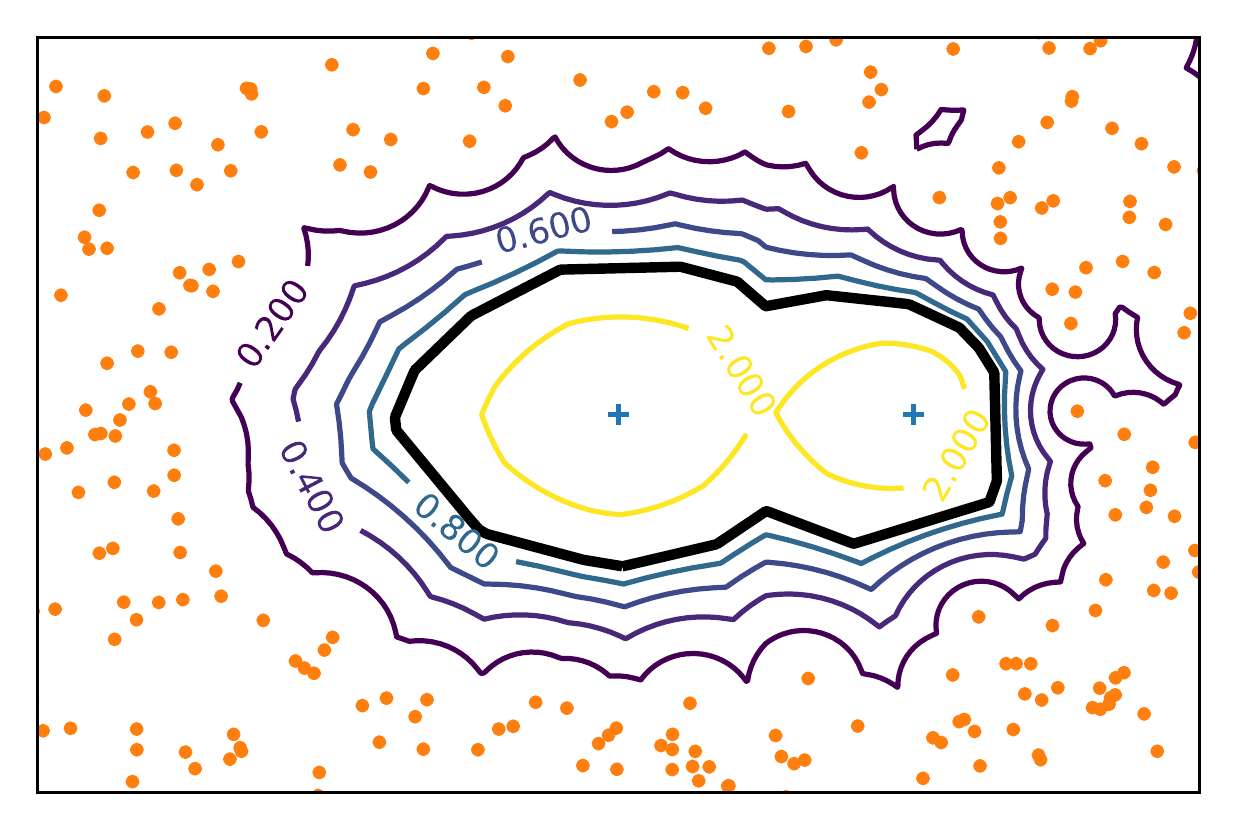}
    \includegraphics[scale=0.33]{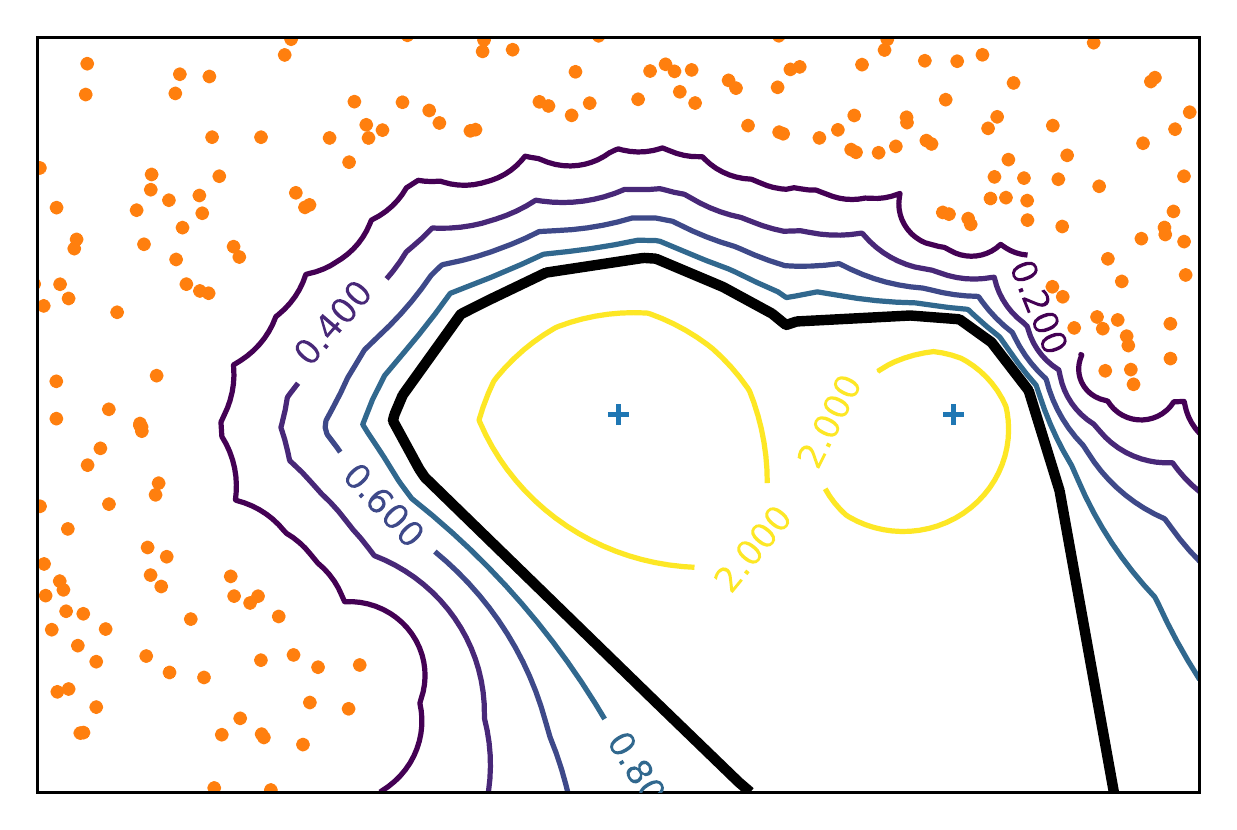}}
\caption{Behavior of the decision boundary according to the $\gamma$ value for the 1-NN classifier on two toy datasets. 
The positive points are represented by blue crosses and the negatives by orange points.
The black line represents the standard decision boundary for the 1-NN classifier, i.e. when $\gamma=1$.}
\label{intro:fig:simuldata}
\end{figure}
\vspace{-0.2cm}

\begin{algorithm}[htb]
\DontPrintSemicolon
\SetAlgoLined
\SetKwInOut{Input}{Input}
\SetKwInOut{Output}{Output}
\Input{a query $\x$ to be classified, a set of labeled samples $S = S_+ \cup S_- $, a number of neighbors $k$, a positive real value $\gamma$, a distance function $d$}
\Output{the predicted label of $\x$}
\BlankLine
$\mathcal{NN}^-,\mathcal{D}^- \leftarrow nn(k, \x, S_-)$ ~~~// nearest negative neighbors with their distances\;
$\mathcal{NN}^+,\mathcal{D}^+ \leftarrow nn(k, \x, S_+)$ ~~~// nearest positive neighbors with their distances\;
$\mathcal{D}^+ \leftarrow \gamma \cdot \mathcal{D}^+$\;
$\mathcal{NN}_\gamma \leftarrow firstK\left(k, sortedMerge( (\mathcal{NN}^-,\mathcal{D}^-), (\mathcal{NN}^+,\mathcal{D}^+))\right)$\; 
$y \leftarrow + $ if $ \left| \mathcal{NN}_\gamma \cap \mathcal{NN}^+ \right| \ge \frac{k}{2} $ else $-$ ~~~// majority vote based on $\mathcal{NN}_\gamma$ \;
\Return $y$\;
\caption{Classification of a new example with $\gamma$\knn{}}
\label{algo:classif}
\end{algorithm}

\subsection{Theoretical analysis}
\label{subsec:analysis}

In this section, we formally analyze what could be a good range of values for the $\gamma$ parameter of our corrected version of the $k-$NN algorithm. 
To this aim, we study what impact $\gamma$ has on the probability to get a false positive (and false negative) at test time and explain why it is important to choose $\gamma<1$ when the imbalance in the data is significant. The following analysis is made for $k=1$ but note that the conclusion still holds for a $k$-NN.

\begin{myprop}{(False Negative probability)} \label{prop:FN}
Let $d_\gamma(\x,\x_+)=\gamma d(\x,\x_+)$, $\forall \gamma >0$, be our modified distance used between a query $\x$ and any positive training example $\x_+$, where $d(\x,\x_+)$ is some distance function. 
Let $FN_\gamma(\z)$ be the probability for a positive example $\z$ to be a false negative using Algorithm (\ref{algo:classif}). 
The following result holds: if $\gamma \leq 1$,
\begin{eqnarray*}
FN_\gamma(\z) & \leq & FN(\z)
\end{eqnarray*}
\end{myprop}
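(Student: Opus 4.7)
My plan is to reduce the claim to a pointwise inclusion of events and then appeal to monotonicity of probability. Fix a positive test point $\z$. Since the modified distance $d_\gamma$ only rescales distances to positive training points by the same factor $\gamma$, the ordering of points within $S_+$ (and within $S_-$) under $d_\gamma$ is identical to the ordering under the original distance $d$. Consequently, the nearest positive neighbor $\x_+^\ast = \arg\min_{\x \in S_+} d(\z,\x)$ and the nearest negative neighbor $\x_-^\ast = \arg\min_{\x \in S_-} d(\z,\x)$ are the same whether we work with $d$ or with $d_\gamma$.

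Next, I would unfold what it means, under Algorithm~\ref{algo:classif} with $k=1$, for $\z$ to be a false negative. With $d_\gamma$, the nearest-neighbor vote labels $\z$ as negative iff $d(\z,\x_-^\ast) < \gamma\, d(\z,\x_+^\ast)$. With $d$ (i.e.\ $\gamma = 1$), the condition is $d(\z,\x_-^\ast) < d(\z,\x_+^\ast)$. The event $E_\gamma := \{\,d(\z,\x_-^\ast) < \gamma\, d(\z,\x_+^\ast)\,\}$ is measurable with respect to the draw of the training sample $S$, and similarly for $E_1$.

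The key observation is that for $\gamma \leq 1$ we have $\gamma\, d(\z,\x_+^\ast) \leq d(\z,\x_+^\ast)$, so whenever $E_\gamma$ holds we immediately get
\[
 d(\z,\x_-^\ast) \;<\; \gamma\, d(\z,\x_+^\ast) \;\leq\; d(\z,\x_+^\ast),
\]
hence $E_1$ holds as well. Thus $E_\gamma \subseteq E_1$ as events over the random training sample, and taking probabilities (or, equivalently, expectations of indicators) yields $FN_\gamma(\z) = \Pr(E_\gamma) \leq \Pr(E_1) = FN(\z)$, which is the desired inequality.

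The argument is genuinely short, so the only subtlety I anticipate is bookkeeping: being explicit that the probability is taken over $S$ with $\z$ fixed, that $\x_+^\ast$ and $\x_-^\ast$ are well-defined under both distances because scaling preserves within-class order, and that ties can be handled by any fixed tie-breaking rule (the inequality above is strict on the $d(\z,\x_-^\ast)$ side, so the argument is insensitive to the convention). No integration or continuity assumption on the underlying distribution of $\z$ and $S$ is needed for the inequality itself.
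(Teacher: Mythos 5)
Your proof is correct, but it takes a different route from the paper's. The paper's sketch conditions on the distance $\epsilon$ from $\z$ to its nearest (negative) neighbor, treats the positives as i.i.d.\ draws, and computes both probabilities in closed form as $\bigl(1-P(\x'\in\mathcal{S}_{\epsilon/\gamma}(\z))\bigr)^{m_+}$ versus $\bigl(1-P(\x'\in\mathcal{S}_{\epsilon}(\z))\bigr)^{m_+}$; the inequality then follows by comparing the radii $\epsilon/\gamma \geq \epsilon$ of the two spheres. You instead avoid computing either probability: you observe that rescaling preserves the within-class ordering, write the false-negative outcome as an event $E_\gamma = \{d(\z,\x_-^\ast) < \gamma\, d(\z,\x_+^\ast)\}$ over the draw of $S$, and note the pointwise inclusion $E_\gamma \subseteq E_1$ for $\gamma \leq 1$, so monotonicity of probability finishes the argument. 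Your version is more elementary and more general --- it needs neither the independence of the positives nor the conditioning on $\epsilon$, and it is insensitive to the distributional assumptions implicit in the paper's product formula. What the paper's computation buys, and yours does not, is the explicit closed forms (\ref{eq:FN_g}) and (\ref{eq:FN}), which the authors exploit immediately after the proposition to argue that $FN_\gamma(\z)$ converges to $0$ faster than $FN(\z)$ when $m_+$ is small and $\gamma$ is taken well below $1$; that quantitative remark does not follow from the inclusion argument alone.
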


\begin{proof}{(sketch of proof)}
  Let $\epsilon$ be the distance from $\z$ to its nearest-neighbor $N_\z$.
  $\z$ is a false negative if $N_\z \in S_-$ that is all positives $\x' \in S_+$ are outside the sphere $\mathcal{S}_{\frac{\epsilon}{\gamma}}(\z)$ centered at $\z$ of radius $\frac{\epsilon}{\gamma}$. 
  Therefore,
\begin{eqnarray}
FN_{\gamma}(\z)&= &\displaystyle \prod_{\x' \in S_+} \left ( 1-P(\x' \in \mathcal{S}_{\frac{\epsilon}{\gamma}}(\z)) \right ), \nonumber\\
&= & \left ( 1-P(\x' \in \mathcal{S}_{\frac{\epsilon}{\gamma}}(\z)) \right)^{m_+} \label{eq:FN_g}
\end{eqnarray}
while 
\begin{eqnarray}
FN(\z)=\left( 1-P(\x' \in \mathcal{S}_\epsilon(\z)) \right)^{m_+}.    \label{eq:FN}
\end{eqnarray}
Solving (\ref{eq:FN_g}) $\leq$ (\ref{eq:FN}) implies  $\gamma \leq 1$.
\end{proof}

This result means that satisfying $\gamma < 1$ allows us to increase the decision boundary around positive examples (as illustrated in Fig. \ref{intro:fig:simuldata}), yielding a smaller risk to get false negatives at test time.  
An interesting comment can be made from Eq.(\ref{eq:FN_g}) and (\ref{eq:FN}) about their convergence. 
As $m_+$ is supposed to be very small in imbalanced datasets, the convergence of  $FN(\z)$ towards 0 is pretty slow, while one can speed-up this convergence with $FN_\gamma(\z)$ by increasing the radius of the sphere $\mathcal{S}_{\frac{\epsilon}{\gamma}}(\z)$, that is taking a small value for $\gamma$.

\begin{myprop}{(False Positive probability)} \label{prop:FP}
Let $FP_\gamma(\z)$ be the probability for a negative example $\z$ to be a false positive using Algorithm (\ref{algo:classif}). The following result holds: if $\gamma \geq 1$,
\begin{eqnarray*}
FP_\gamma(\z) & \leq & FP(\z)
\end{eqnarray*}
\end{myprop}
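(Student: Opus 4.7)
The plan is to mirror the argument used for Proposition (False Negative probability), but with the roles of positives and negatives swapped and with the direction of the inequality flipped, since now we want to \emph{shrink} the effective region of influence of the positives rather than enlarge it.

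First, I would fix the query $\z\in S_-$ and let $\epsilon$ denote the distance from $\z$ to its nearest negative training example. Since $d_\gamma$ agrees with $d$ on negatives, this quantity is unaffected by $\gamma$. By Algorithm~\ref{algo:classif}, the point $\z$ is wrongly predicted as positive exactly when at least one positive training example $\x'\in S_+$ satisfies $d_\gamma(\z,\x')=\gamma\, d(\z,\x')<\epsilon$, that is, $\x'\in \mathcal{S}_{\epsilon/\gamma}(\z)$, the ball centered at $\z$ of radius $\epsilon/\gamma$. Equivalently, $\z$ is \emph{not} a false positive iff every positive lies outside $\mathcal{S}_{\epsilon/\gamma}(\z)$.

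Next, assuming the positives $\x'\in S_+$ are drawn i.i.d.\ (as implicitly done in the proof of Proposition~\ref{prop:FN}), I would write
\begin{eqnarray*}
FP_\gamma(\z) &=& 1-\prod_{\x'\in S_+}\!\left(1-P\!\left(\x'\in \mathcal{S}_{\epsilon/\gamma}(\z)\right)\right) \\
&=& 1-\left(1-P\!\left(\x'\in \mathcal{S}_{\epsilon/\gamma}(\z)\right)\right)^{m_+},
\end{eqnarray*}
and, in particular for $\gamma=1$,
\begin{eqnarray*}
FP(\z) &=& 1-\left(1-P\!\left(\x'\in \mathcal{S}_{\epsilon}(\z)\right)\right)^{m_+}.
\end{eqnarray*}

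Finally, I would compare the two quantities by noting that if $\gamma\geq 1$ then $\epsilon/\gamma\leq \epsilon$, so $\mathcal{S}_{\epsilon/\gamma}(\z)\subseteq \mathcal{S}_\epsilon(\z)$ and therefore $P(\x'\in \mathcal{S}_{\epsilon/\gamma}(\z))\leq P(\x'\in \mathcal{S}_{\epsilon}(\z))$. Raising $1$ minus each probability to the $m_+$-th power preserves the inequality, and subtracting from $1$ flips it, yielding $FP_\gamma(\z)\leq FP(\z)$, as claimed. The converse direction ($\gamma\geq 1$ is also necessary for the inequality to hold for all configurations) follows by reversing the sphere-inclusion argument.

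I do not expect a real obstacle here: the computation is the exact mirror of Proposition~\ref{prop:FN}, and the only care required is (i) bookkeeping the complementary event (``at least one positive falls in the ball'' rather than ``all positives fall outside''), and (ii) tracking that the direction $\gamma\geq 1$ shrinks, instead of enlarges, the sphere $\mathcal{S}_{\epsilon/\gamma}(\z)$. Combined with Proposition~\ref{prop:FN}, this pair of bounds then frames the trade-off that motivates selecting $\gamma$ strictly between $0$ and $1$ in practice: one gains on $FN$ at the price of losing on $FP$, and the optimal value of $\gamma$ is the one that balances these two effects in the F-Measure.
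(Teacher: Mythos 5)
Your proof is correct (at the same level of informality as the paper's sketch), but it uses a genuinely different decomposition of the false-positive event. You condition on $\epsilon$ being the distance from $\z$ to its nearest \emph{negative}, and write $FP_\gamma(\z) = 1-\left(1-P(\x'\in\mathcal{S}_{\epsilon/\gamma}(\z))\right)^{m_+}$, i.e.\ the complement of ``all positives lie outside the shrunken ball,'' exactly mirroring the bookkeeping of Proposition~\ref{prop:FN}. The paper instead takes $\epsilon$ to be the distance from $\z$ to its nearest \emph{positive}, so that $\z$ is a false positive iff all negatives lie outside the sphere of radius $\gamma\epsilon$, giving $FP_\gamma(\z)=\left(1-P(\x'\in\mathcal{S}_{\gamma\epsilon}(\z))\right)^{m_-}$ directly as a product over the $m_-$ negatives. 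Both expressions are monotone in $\gamma$ in the right direction and both yield $FP_\gamma(\z)\leq FP(\z)$ for $\gamma\geq 1$, so the proposition itself is equally well served. The difference matters only for the discussion that follows in the paper: the remark that $FP_\gamma(\z)$ converges to $0$ much faster than $FN_\gamma(\z)$ because $m_-\gg m_+$ reads the exponent $m_-$ off the paper's form of Eq.~(\ref{eq:FP_g}); your form, with exponent $m_+$ inside a complement, does not directly support that comparison, so if you adopted your decomposition you would need to rephrase that subsequent argument. Your closing observation about the necessity of $\gamma\geq 1$ matches the paper's ``solving (\ref{eq:FP_g}) $\leq$ (\ref{eq:FP})'' phrasing.
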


\begin{proof}{(sketch of proof)}
 Using the same idea as before, we get: 
\begin{eqnarray}
FP_{\gamma}(\z) & = & \displaystyle \prod_{\x' \in S_-} \left ( 1-P(\x' \in \mathcal{S}_{\gamma\epsilon}(\z)) \right ), \nonumber \\
& = & \left ( 1-P(\x' \in \mathcal{S}_{\gamma\epsilon}(\z)) \right)^{m_-} \label{eq:FP_g}
\end{eqnarray}
while 
\begin{eqnarray}
FP(\z)= \left ( 1-P(\x' \in \mathcal{S}_{\epsilon}(\z)) \right )^{m_-}. \label{eq:FP}
\end{eqnarray}
Solving (\ref{eq:FP_g}) $\leq$ (\ref{eq:FP}) implies  $\gamma \geq 1$.
\end{proof}

As expected, this result suggests to take $\gamma > 1$ to increase the distance $d_\gamma(\z,\x_+)$ from a negative test sample $\z$ to any positive training example $\x_+$ and thus reduce the risk to get a false positive.  
It is worth noticing that while the two conclusions from Propositions \ref{prop:FN} and \ref{prop:FP} are contradictory, the convergence of $FP_{\gamma}(\z)$ towards 0 is much faster than that of $FN_{\gamma}(\z)$ because $m_- >> m_+$ in an imbalance scenario. Therefore,  fulfilling the requirement $\gamma > 1$ is much less important than satisfying $\gamma < 1$. For this reason, we will impose our Algorithm (\ref{algo:classif}) to take $\gamma \in ]0,1[$. As we will see in the experimental section, the more imbalance the datasets, the smaller the optimal $\gamma$, confirming the previous conclusion.


\section{Experiments}
\label{sec:experiments}

In this section, we present an experimental evaluation of our method on public and real private datasets with comparisons to classic distance-based methods and state of the art sampling strategies able to deal with imbalanced data.
All results are reported using $k=3$. 
Note that if the theoretical study is presented for $k=1$, the same Analysis can be conducted for other values of $k$.
Furthermore, we have decided to present the results for $k=3$ as it is the most used $k$-value for that kind of algorithms (e.g. for LMNN~\cite{weinberger2009distance})
The results for $k=1$ are comparable as the presented results in this section


\subsection{Experimental setup}
\label{subsec:setting}

\begin{table}[t]
\caption{\label{tbl:data} Information about the studied datasets sorted by imbalance ratio. The first part refers to the public datasets, the second one describes the \textit{DGFiP} private datasets.}
\begin{sc}
  \centerline{\scalebox{0.86}{\begin{tabular}{lccccc} 
  \hline
datasets    & size & dim & \%$+$ & \%$-$ & IR  \\   
\hline 
balance      &  625 &  4 & 46.1 & 53.9 &   1.2 \\
autompg      &  392 &  7 & 37.5 & 62.5 &   1.7 \\
ionosphere   &  351 & 34 & 35.9 & 64.1 &   1.8\\
pima         &  768 &  8 & 34.9 & 65.1 &   1.9\\
wine         &  178 & 13 & 33.1 & 66.9 &   2  \\
glass        &  214 &  9 & 32.7 & 67.3 &   2.1 \\ 
german       & 1000 & 23 & 30   & 70   &   2.3 \\
vehicle      &  846 & 18 & 23.5 & 76.5 &   3.3 \\ 
hayes        &  132 &  4 & 22.7 & 77.3 &   3.4 \\
segmentation & 2310 & 19 & 14.3 & 85.7 &   6  \\
abalone8     & 4177 & 10 & 13.6 & 86.4 &   6.4 \\
yeast3       & 1484 &  8 & 11   & 89   &   8.1 \\
pageblocks   & 5473 & 10 & 10.2 & 89.8 &   8.8 \\
satimage     & 6435 & 36 &  9.7 & 90.3 &   9.3 \\
libras       &  360 & 90 &  6.7 & 93.3 &  14 \\
wine4        & 1599 & 11 &  3.3 & 96.7 &  29.2\\
yeast6       & 1484 &  8 &  2.4 & 97.6 &  41.4  \\
abalone17    & 4177 & 10 &  1.4 & 98.6 &  71.0 \\ 
abalone20    & 4177 & 10 &  0.6 & 99.4 & 159.7 \\ 
 \hhline{======}
dgfip 19 2   &  16643 & 265 & 35.1 & 64.9 & 1.9\\ 
dgfip 9 2    &  440 & 173& 24.8 & 75.2 &   3\\ 
dgfip 4 2    &  255 &  82& 20.8 & 79.2 &   3.8 \\
dgfip 8 1    & 1028 & 255& 17.8 & 82.2 &   4.6\\
dgfip 8 2    & 1031 & 254& 17.9 & 82.1 &   4.6\\
dgfip 9 1    &  409 & 171& 16.4 & 83.6 &   5.1\\
dgfip 4 1    &  240 &  76& 16.2 & 83.8 &   5.2\\
dgfip 16 1   &  789 & 162& 10.3 & 89.7 &   8.7\\
dgfip 16 2   &  786 & 164&  9.9 & 90.1 &   9.1\\
dgfip 20 3   & 17584 & 294 & 5 & 95 & 19 \\
dgfip 5 3   & 19067 & 318 & 3.9 & 96.1 & 24.9 \\
\hline
  \end{tabular}
  }}
\end{sc}
\end{table}

\begin{table*}[t]
\caption{\label{tbl:res_3NN_full} Results for $3-$NN on the public datasets. The values correspond to the mean F-measure $F_1$ over $5$ runs. The standard deviation is indicated between brackets. The best result on each dataset is indicated in bold. }
\begin{footnotesize}
\begin{center}
\begin{sc}
\scalebox{1}{
  \begin{tabular}{|l|c|c|c|c|c|c|c|} \hline
datasets & 	 $3-$NN & dup$k-$NN& w$k-$NN& cw$k-$NN & kRNN & LMNN & $\gamma k-$NN \\
	 \hline
balance      & 0.954\stdev{0.017} & 0.954\stdev{0.017} & 0.957\stdev{0.017} & 0.961\stdev{0.010} & {\bf0.964}\stdev{0.010} & 0.963\stdev{0.012} & 0.954\stdev{0.029} \\
autompg      & 0.808\stdev{0.077} & 0.826\stdev{0.033} & 0.810\stdev{0.076} & 0.815\stdev{0.053} & {\bf0.837}\stdev{0.040} & 0.827\stdev{0.054} & 0.831\stdev{0.025} \\
ionosphere   & 0.752\stdev{0.053} & 0.859\stdev{0.021} & 0.756\stdev{0.060} & 0.799\stdev{0.036} & 0.710\stdev{0.052} & 0.890\stdev{0.039} & {\bf0.925}\stdev{0.017} \\
pima         & 0.500\stdev{0.056} & 0.539\stdev{0.033} & 0.479\stdev{0.044} & 0.515\stdev{0.037} & {\bf0.579}\stdev{0.055} & 0.499\stdev{0.070} & 0.560\stdev{0.024} \\
wine         & 0.881\stdev{0.072} & 0.852\stdev{0.057} & 0.881\stdev{0.072} & 0.876\stdev{0.080} & 0.861\stdev{0.093} & {\bf0.950}\stdev{0.036} & 0.856\stdev{0.086} \\
glass        & 0.727\stdev{0.049} & 0.733\stdev{0.061} & 0.736\stdev{0.052} & 0.717\stdev{0.055} & 0.721\stdev{0.031} & 0.725\stdev{0.048} & {\bf0.746}\stdev{0.046} \\
german       & 0.330\stdev{0.030} & 0.449\stdev{0.037} & 0.326\stdev{0.030} & 0.344\stdev{0.029} & 0.383\stdev{0.048} & 0.323\stdev{0.054} & {\bf0.464}\stdev{0.029} \\
vehicle      & 0.891\stdev{0.044} & 0.867\stdev{0.027} & 0.891\stdev{0.044} & 0.881\stdev{0.021} & 0.879\stdev{0.034} & {\bf0.958}\stdev{0.020} & 0.880\stdev{0.049} \\
hayes        & 0.036\stdev{0.081} & 0.183\stdev{0.130} & 0.050\stdev{0.112} & 0.221\stdev{0.133} & 0.050\stdev{0.100} & 0.036\stdev{0.081} & {\bf0.593}\stdev{0.072} \\
segmentation & 0.859\stdev{0.028} & 0.862\stdev{0.018} & 0.877\stdev{0.028} & 0.851\stdev{0.022} & 0.797\stdev{0.019} & {\bf0.885}\stdev{0.034} & 0.848\stdev{0.025} \\
abalone8     & 0.243\stdev{0.037} & 0.318\stdev{0.013} & 0.241\stdev{0.034} & 0.330\stdev{0.015} & 0.253\stdev{0.041} & 0.246\stdev{0.065} & {\bf0.349}\stdev{0.018} \\
yeast3       & 0.634\stdev{0.066} & 0.670\stdev{0.034} & 0.634\stdev{0.066} & 0.699\stdev{0.015} & {\bf0.723}\stdev{0.021} & 0.667\stdev{0.055} & 0.687\stdev{0.033} \\
pageblocks   & 0.842\stdev{0.020} & 0.850\stdev{0.024} & 0.849\stdev{0.019} & 0.847\stdev{0.029} & 0.843\stdev{0.023} & {\bf0.856}\stdev{0.032} & 0.844\stdev{0.023} \\
satimage     & 0.454\stdev{0.039} & 0.457\stdev{0.027} & 0.454\stdev{0.039} & 0.457\stdev{0.023} & 0.458\stdev{0.033} & {\bf0.487}\stdev{0.026} & 0.430\stdev{0.008} \\
libras       & 0.806\stdev{0.076} & 0.788\stdev{0.187} & 0.806\stdev{0.076} & 0.789\stdev{0.097} & {\bf0.810}\stdev{0.056} & 0.770\stdev{0.027} & 0.768\stdev{0.106} \\
wine4        & 0.031\stdev{0.069} & {\bf0.090}\stdev{0.086} & 0.031\stdev{0.069} & 0.019\stdev{0.042} & 0.000\stdev{0.000} & 0.000\stdev{0.000} & {\bf0.090}\stdev{0.036} \\
yeast6       & 0.503\stdev{0.302} & 0.449\stdev{0.112} & 0.502\stdev{0.297} & 0.338\stdev{0.071} & 0.490\stdev{0.107} & 0.505\stdev{0.231} & {\bf0.553}\stdev{0.215} \\
abalone17    & 0.057\stdev{0.078} & {\bf0.172}\stdev{0.086} & 0.057\stdev{0.078} & 0.096\stdev{0.059} & 0.092\stdev{0.025} & 0.000\stdev{0.000} & 0.100\stdev{0.038} \\
abalone20    & 0.000\stdev{0.000} & 0.000\stdev{0.000} & 0.000\stdev{0.000} & {\bf0.067}\stdev{0.038} & 0.000\stdev{0.000} & 0.057\stdev{0.128} & 0.052\stdev{0.047} \\
        \hhline{========}
mean         & 0.543\stdev{0.063} & 0.575\stdev{0.053} & 0.544\stdev{0.064} & 0.559\stdev{0.046} & 0.550\stdev{0.041} & 0.560\stdev{0.053} & {\bf0.607}\stdev{0.049} \\
\hline
  \end{tabular}
  }
\end{sc}
\end{center}
\end{footnotesize}
\end{table*}

For the experiments, we use several public datasets from the classic UCI~\footnote{\url{https://archive.ics.uci.edu/ml/datasets.html}} and KEEL~\footnote{\url{https://sci2s.ugr.es/keel/datasets.php}} repositories. 
We also use eleven real fraud detection datasets  provided by the General Directorate of Public Finances (DGFiP) which is part of the French central public administration related to the French Ministry for the Economy and Finance.  These private datasets correspond to data coming from  tax and VAT declarations of French companies and are used for tax fraud detection purpose covering declaration of over-valued, fictitious or prohibited charges, wrong turnover reduction or particular international VAT frauds such as "VAT carousels". The DGFiP performs about 50,000 tax audits per year within a panel covering more than 3,000,000 companies. Being able to select the right companies to control each year is a crucial issue with a potential high societal impact. Thus, designing efficient imbalance learning methods is key. The main properties of the datasets are summarized in Table~\ref{tbl:data}, including the imbalance ratio (IR).

All the datasets are normalized using a min-max normalization such that each feature lies in the range $[-1,1]$.
We randomly draw 80\%-20\% splits of the data to generate the training and test sets respectively. 
Hyperparameters are tuned with a 10-fold cross-validation over the training set. 
We repeat the process over 5 runs and average the results in terms of F-measure $F_1$.
In a first series of experiments, we compare our method, named $\gamma k-$NN,  to 6 other distance-based baselines:
\begin{enumerate}
\item[•] the classic  $k-$Nearest Neighbor algorithm ($k-$NN),
\item[•] the weighted version of $k-$NN using the inverse distance as a weight to predict the label (w$k-$NN)~\cite{dudani1976distance},
\item[•] the class weighted version of $k-$NN (cw$k-$NN)~\cite{barandela2003strategies},
\item[•] the $k-$NN version where each positive is duplicated according to the IR of the dataset (dup$k-$NN),
\item[•] $kR$NN where the sparsity of minority examples is taken into account~\cite{zhang2017krnn} by modifying the way the posterior probability of belonging to the positive class is computed.
\item[•] the  metric learning method LMNN~\cite{weinberger2009distance}. 
\end{enumerate}
Note that we do not compare with~\cite{hajizadeh2014nearest} as the following results are given with $k=3$ while their algorithm can be used only with $k=1$.

We set the number of nearest neighbors to $k=3$ for all  methods. 
The hyperparameter $\mu$ of \textit{LMNN}, weighting the impact of impostor constraints (see~\cite{weinberger2009distance} for more details), is tuned in the range $[0,1]$ using a step of $0.1$. 
Our $\gamma$ parameter is tuned in the range $[0,1]$\footnote{We experimentally noticed that using a larger range for $\gamma$ leads in fact to a potential decrease of performances due to overfitting phenomena. 
This behavior is actually in line with the analysis provided in Section~\ref{subsec:analysis}.} using a step of $0.1$. For $kR$NN, we have used parameters values as described in~\cite{zhang2017krnn}, however we take $k=3$ instead of $1$.

In a second series of experiments, we compare our method to the five oversampling strategies described in Section~\ref{subsec:sampling}: SMOTE, Borderline-SMOTE, ADASYN, SMOTE with ENN, SMOTE with Tomek's link. 
The number of generated positive examples is tuned over the set of ratios $\dfrac{m_+}{m_-}\in \{0.1,0.2,...,0.9,1.0\}$ and such that  the new ratio is greater than the original one before sampling.  
Other parameters of these methods are the default ones used by the package \textit{ImbalancedLearn} of \textit{Scikit-learn}.


\subsection{Results}
\label{subsec:results}

The results on the public datasets using distance-based methods are provided in Table~\ref{tbl:res_3NN_full}.
Overall, our $\gamma k-$NN approach performs much better than its competitors by achieving an improvement of at least $3$ points on average, compared to the 2\textsuperscript{nd} best method ({\sc dup}$k-$NN). The different $k-$NN versions fail globally to provide  models efficient whatever the imbalance ratio. The metric learning approach LMNN is competitive when IR is smaller than 10  (although algorithmically more costly). Beyond, it faces some difficulties to find a relevant projection space due to the lack of positive data. The efficiency of $\gamma k-$NN is not particularly sensitive to the imbalance ratio.

The results for our second series of experiments, focusing on sampling strategies, are reported on Fig.~\ref{exp:fig:compare_sampling}.
We compare  each of the 5 sampling methods with the average performances of $3-$NN and $\gamma k$-NN obtained over the 19 public datasets reported in Table~\ref{tbl:res_3NN_full}. 
Additionally,  we also use $\gamma k-$NN on the top of the sampling methods to evaluate how both strategies are complementary. However, in this scenario, we propose to learn a different $\gamma$ value to be used with the synthetic positives. Indeed, some of them may be generated in some true negative areas and in this situation it might be more appropriate to decrease the influence of such synthetic examples. The $\gamma$ parameter for these examples is then tuned in the range $[0,2]$ using a step of 0.1. 
If one can  easily observe that all the oversampling strategies improve the classic $k-NN$, none of them is better than our $\gamma k$-NN method showing that our approach is able to deal efficiently with imbalanced data. Moreover, we are able to improve the efficiency of $\gamma k$-NN when it is coupled with an oversampling strategy. The choice of the oversampler does not really influence the results. The gains obtained by using a sampling method with $\gamma k$-NN for each dataset is illustrated in Fig.~\ref{exp:fig:compare_best_sampling} (top).

To study the influence of using two $\gamma$ parameters when combined with an oversampling strategy, we show an illustration (Fig. \ref{exp:fig:gamma_sampling} (top)) of the evolution of the $F$-measure with respect to the $\gamma$ values for synthetic and real positive instances.
The best $F$-measure is achieved when the $\gamma$ on real positives is smaller than 1 and when the $\gamma$ on synthetic positives is greater than 1, justifying the interest of using two parameterizations of $\gamma$.
In Fig. \ref{exp:fig:gamma_sampling} (bottom), we show how having two $\gamma$ values gives the flexibility to independently control the increased influence of real positives and the one of artificial positives.

\begin{figure}[t]
\centerline{\includegraphics[scale=0.53]{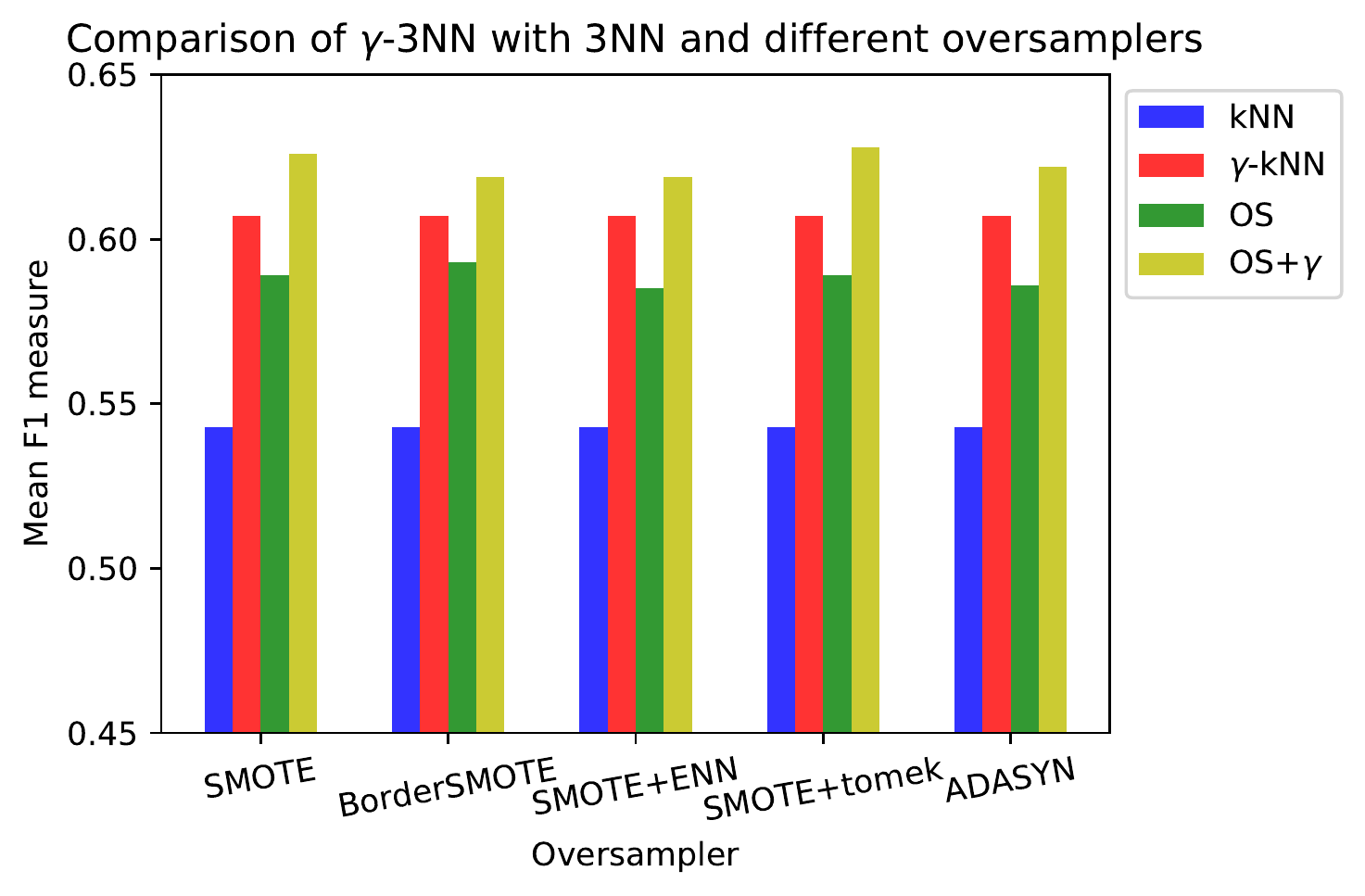}}
\caption{Comparison of different sampling strategies averaged over the 19 public datasets.  $OS$ refers to the results of the corresponding sampling strategy and $OS+\gamma$ to the case when the sampling strategy is combined with $\gamma k$-NN. $k-$NN and $\gamma k-$NN refers to the results of these methods without oversampling as obtained in Table~\ref{tbl:res_3NN_full}.
(numerical values for these graphs are provided in supplementary material) }
\label{exp:fig:compare_sampling}
\end{figure}

\begin{figure}[t]
\centerline{\includegraphics[trim= 10 50 0 50, clip, scale=0.55]{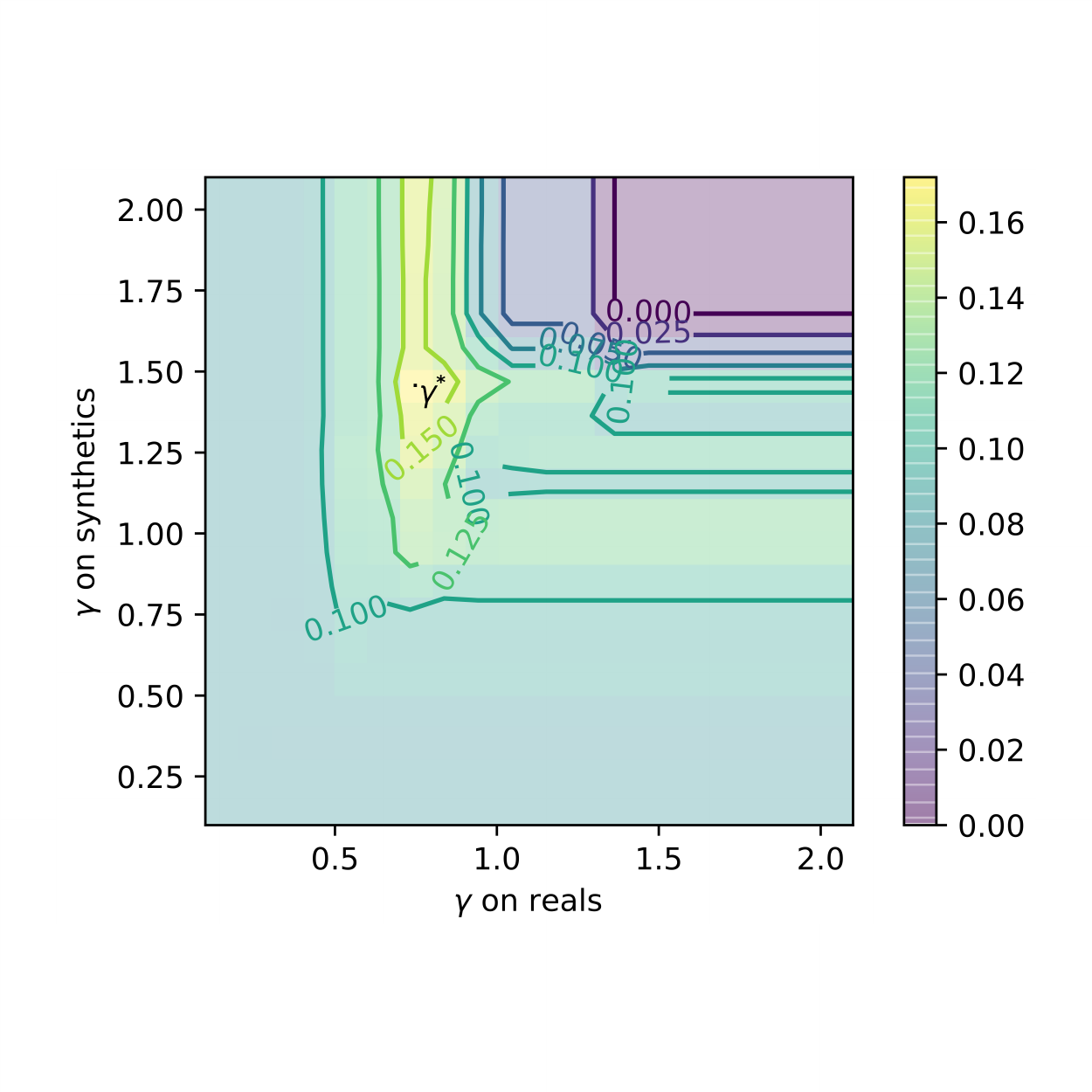}}
\centerline{\includegraphics[scale=0.45]{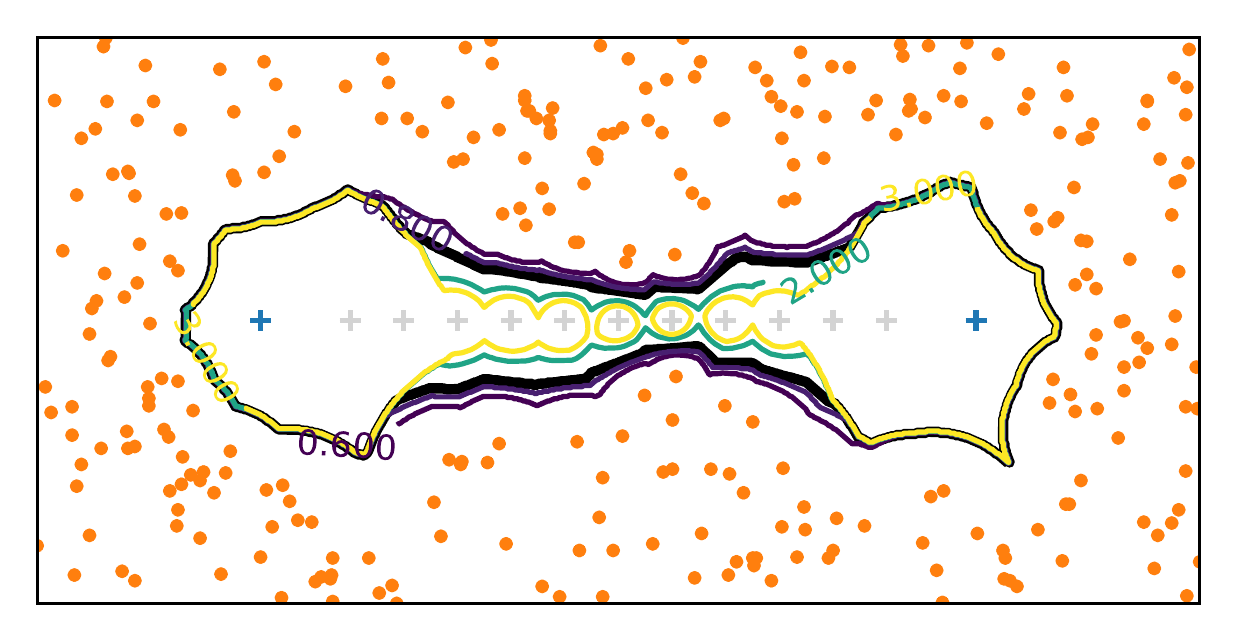}}
  \caption{(Top) An example of heatmap that shows the best couple of $\gamma$ for the OS+$\gamma$\knn{} strategy on the yeast6 dataset with SMOTE and Tomek's link.
  (Bottom) Illustration, on a toy dataset, of the effect of varying the $\gamma$ for generated positive points (in grey) while keeping a fixed $\gamma=0.4$ for real positive points. }
\label{exp:fig:gamma_sampling}
\end{figure}

We now propose a study on the influence of the imbalance ratio on the optimal $\gamma$-parameter. We consider the \textit{Balance} dataset which has the smallest imbalance ratio that we increase  by iteratively randomly under-sampling  the minority class over the training set. We report the results on  Fig.~\ref{exp:fig:compare_best_sampling} (bottom). As expected, we can observe that the optimal $\gamma$ value decreases when the imbalance increases. However, note that from a certain IR (around 15), $\gamma$ stops decreasing to be able to keep a satisfactory F-Measure.

\begin{figure}[h!]
\centerline{\includegraphics[scale=0.435]{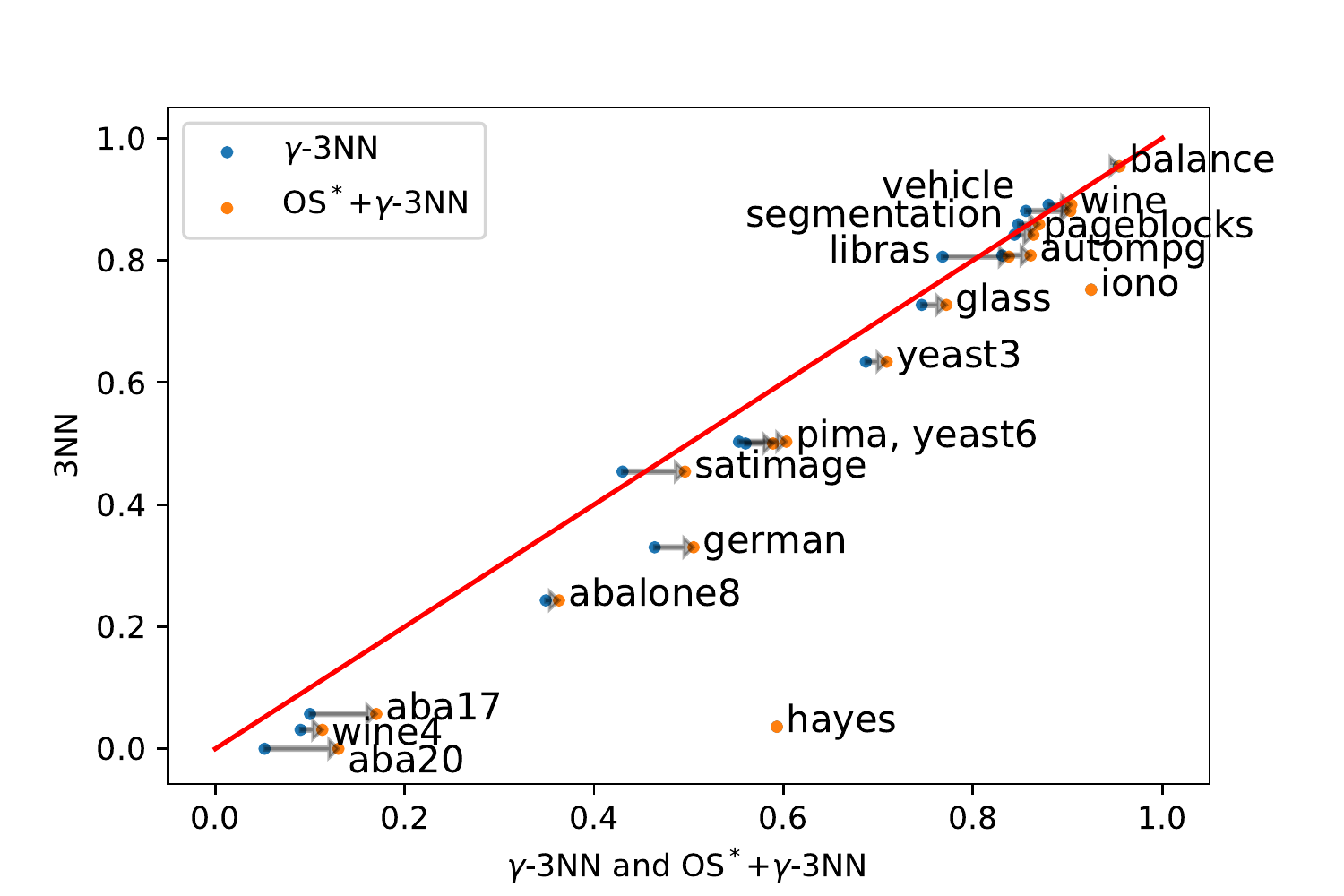}}
\centerline{ \includegraphics[scale=0.435]{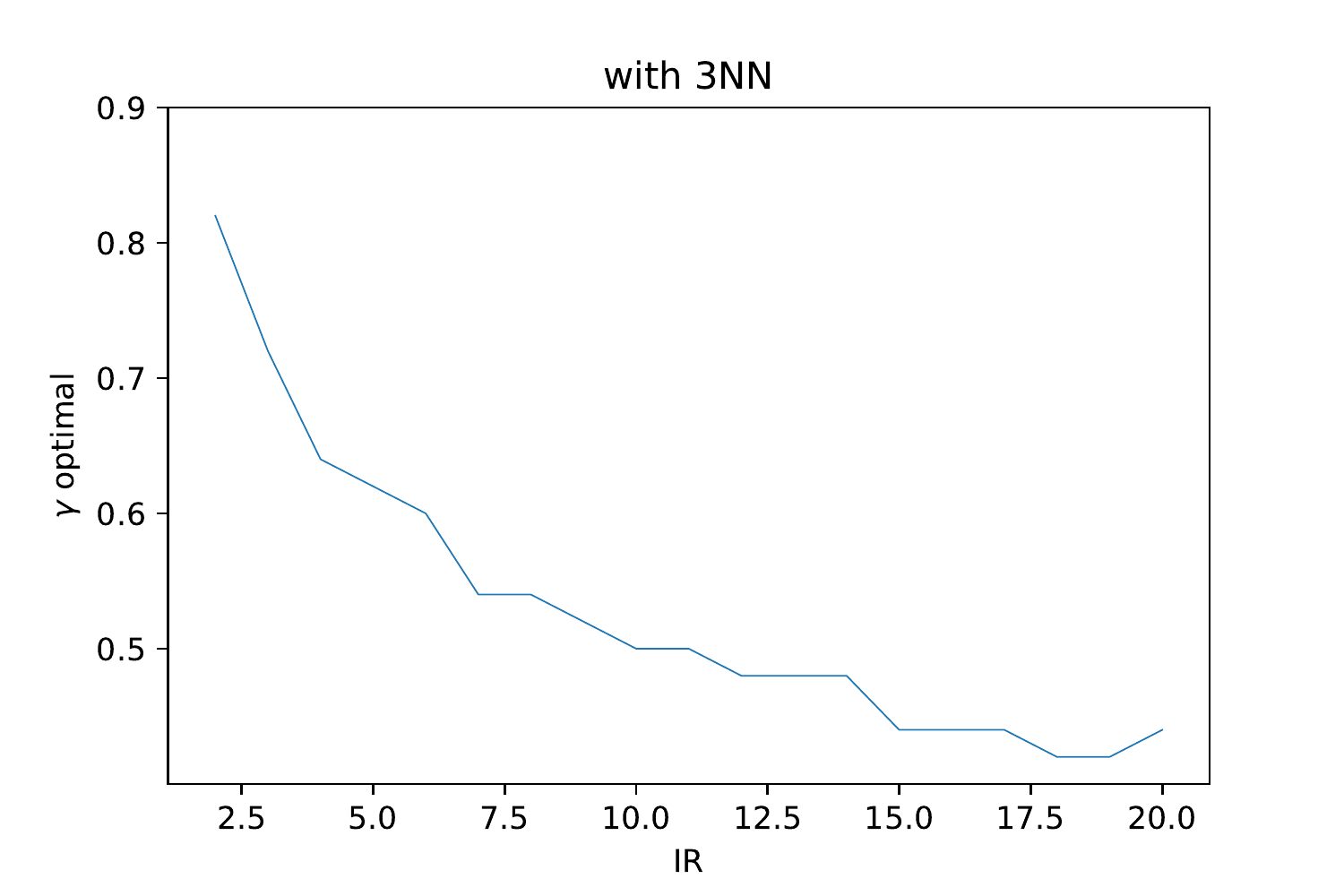}}
\caption{(Top) Comparison of $k$-NN with (i) $\gamma k-$NN (points in blue) and (ii) $\gamma k-$NN coupled with the best sampling strategy (OS$^\star$) (points in orange) for each dataset  and for $k=3$. Points below the line $y=x$ means that $k$-NN is outperformed. (Bottom) Evolution of the optimal $\gamma$ value with respect to the IR for $k=3$.}
\label{exp:fig:compare_best_sampling}
\end{figure}

\begin{table}[tb]
\caption{ \label{tbl:res_3NN_dgfip} Results for $3-$NN on the DGFiP datasets. The values correspond to the mean F-measure $F_1$ over $5$ runs. The best result on each dataset is indicated in bold while the second is underlined.}
\begin{center}
\begin{sc}
\scalebox{0.85}{
  \begin{tabular}{|l|c|c|c|c|c|} \hline
datasets &  $3-$NN & $\gamma k-$NN & SMOTE & SMOTE+$\gamma k-$NN \\
	 \hline
Dgfip19 2 & 0,454\stdev{0,007} & \underline{0,528}\stdev{0,005} & 0,505\stdev{0,010} & {\bf0,529}\stdev{0,003} \\
Dgfip9 2  & 0,173\stdev{0,074} & \underline{0,396}\stdev{0,018} & 0,340\stdev{0,033} & {\bf0,419}\stdev{0,029} \\
Dgfip4 2  & 0,164\stdev{0,155} & \underline{0,373}\stdev{0,018} & 0,368\stdev{0,057} & {\bf0,377}\stdev{0,018} \\
Dgfip8 1  & 0,100\stdev{0,045} & {\bf0,299}\stdev{0,010} & 0,278\stdev{0,043} & {\bf0,299}\stdev{0,011} \\
Dgfip8 2  & 0,140\stdev{0,078} & 0,292\stdev{0,028} & {\bf0,313}\stdev{0,048} & \underline{0,312}\stdev{0,021} \\
Dgfip9 1  & 0,088\stdev{0,090} & 0,258\stdev{0,036} & \underline{0,270}\stdev{0,079} & {\bf0,288}\stdev{0,026} \\
Dgfip4 1  & 0,073\stdev{0,101} & \underline{0,231}\stdev{0,139} & 0,199\stdev{0,129} & {\bf0,278}\stdev{0,067} \\
Dgfip16 1 & 0,049\stdev{0,074} & 0,166\stdev{0,065} & \underline{0,180}\stdev{0,061} & {\bf0,191}\stdev{0,081} \\
Dgfip16 2 & 0,210\stdev{0,102} & 0,202\stdev{0,056} & \underline{0,220}\stdev{0,043} & {\bf0,229}\stdev{0,026} \\
Dgfip20 3 & 0,142\stdev{0,015} & \underline{0,210}\stdev{0,019} & 0,199\stdev{0,015} & {\bf0,212}\stdev{0,019} \\
Dgfip5 3  & 0,030\stdev{0,012} & 0,105\stdev{0,008} & {\bf0,110}\stdev{0,109} & \underline{0,107}\stdev{0,010} \\
        \hhline{======}
mean      & 0,148\stdev{0,068} & \underline{0,278}\stdev{0,037} & 0,271\stdev{0,057} & {\bf0,295}\stdev{0,028} \\
\hline
  \end{tabular}
  }
\end{sc}
\end{center}
\end{table}

The results for the real datasets of the DGFiP are available in Table~\ref{tbl:res_3NN_dgfip}. 
Note that only the SMOTE algorithm is  reported here since the other oversamplers have comparable performances.
The analysis of the results leads to observations similar as the ones made for the public datasets.
Our $\gamma-k$NN approach outperforms classic $k-$NN and is better than the results obtained by the SMOTE strategy.
Coupling the SMOTE sampling method with our distance correction $\gamma k$-NN allows us to improve the global performance showing the applicability of our method on real data.


\section{Conclusion}
\label{sec:conclusion}
In this paper, we have proposed a new strategy that addresses the problem of learning from imbalanced datasets, based on the $k-$NN algorithm and that modifies the distance to the positive examples. 
It has been shown to outperform its competitors in term of F$_1$-measure.
Furthermore, the proposed approach is complementary to oversampling strategies and can even increase their performance.
Our  $\gamma k-$NN algorithm, despite its simplicity, is highly effective even on real data sets.

Two lines of research deserve future investigations.
We can note that tuning $\gamma$ is equivalent to building a diagonal matrix (with $\gamma^2$ in the diagonal) and applying a Mahalanobis distance only between a query and a positive example.
This comment opens the door to a new metric learning algorithm dedicated to optimizing a PSD matrix under F-Measure-based constraints.
If one can learn such a matrix, the second perspective will consist in deriving generalization guarantees over the learned matrix.
In addition, making $\gamma$ non-stationary (a $\gamma(\x)$ that smoothly varies in $\mathcal{X}$) would increase the model flexibility.

\bibliography{ICTAI2019.bib}
\bibliographystyle{apalike}

\end{document}